\documentclass[10pt,twocolumn,letterpaper]{article}

\usepackage[pagenumbers]{cvpr} % To force page numbers, e.g. for an arXiv version

\usepackage{booktabs}
\usepackage{dashrule}
\usepackage{array}
\usepackage{arydshln}
\usepackage{amsthm}
\newtheorem{proposition}{Proposition}

\definecolor{cvprblue}{rgb}{0.21,0.49,0.74}
\usepackage[pagebackref,breaklinks,colorlinks,allcolors=cvprblue]{hyperref}
\usepackage{array}
\usepackage{multirow}
\newcolumntype{V}{>{\ttfamily\small}l}

\def\paperID{*****} % *** Enter the Paper ID here
\def\confName{CVPR}
\def\confYear{2026}

\title{Predictive Spectral Calibration for Source-Free Test-Time Regression}

\author{Nguyen Viet Tuan Kiet\\
\small Hanoi University of Science and Technology\\
{\tt\small kiet.nvt220032@hust.edu.vn}
\and
Huynh Thanh Trung\\
\small VinUniversity\\
{\tt\small trung.ht@vinuni.edu.vn}
\and
Pham Huy Hieu\\
\small VinUniversity\\
{\tt\small hieu.ph@vinuni.edu.vn}
}

\begin{document}
\maketitle

\begin{abstract}
Test-time adaptation (TTA) for image regression has received far less attention than its classification counterpart. Methods designed for classification often depend on classification-specific objectives and decision boundaries, making them difficult to transfer directly to continuous regression targets. Recent progress revisits regression TTA through subspace alignment, showing that simple source-guided alignment can be both practical and effective. Building on this line of work, we propose Predictive Spectral Calibration (PSC), a source-free framework that extends subspace alignment to block spectral matching. Instead of relying on a fixed support subspace alone, PSC jointly aligns target features within the source predictive support and calibrates residual spectral slack in the orthogonal complement. PSC remains simple to implement, model-agnostic, and compatible with off-the-shelf pretrained regressors. Experiments on multiple image regression benchmarks show consistent improvements over strong baselines, with particularly clear gains under severe distribution shifts.
\end{abstract}

\section{Introduction}

Modern vision systems are increasingly deployed in settings where data shifts are the rule rather than the exception: lighting changes across cameras, weather changes across time, and sensor characteristics vary across devices~\cite{koh2021wilds,baek2024unexplored,hendrycksbenchmarking}. In these scenarios, test-time adaptation (TTA)~\cite{sun2020test,wangtent} is appealing because it updates a pretrained model using only unlabeled target samples observed at inference. While this paradigm has matured rapidly for classification ~\cite{liang2025comprehensive,wang2025search}, progress in image regression remains comparatively limited~\cite{adachitest}, despite the central role of regression tasks such as age estimation, depth prediction, and pose estimation.

A key reason is structural mismatch. Many successful classification TTA objectives rely on confidence sharpening~\cite{mummadi2021test}, entropy minimization~\cite{wangtent}, or pseudo-label dynamics~\cite{goyal2022test} that are naturally defined over discrete classes, but become less stable for continuous targets. Recent subspace-alignment methods, especially Significant-Subspace Alignment (SSA)~\cite{adachitest}, show that constraining adaptation within source-informed feature subspaces can be robust and practical. Building on this line, we introduce Predictive Spectral Calibration (PSC), which generalizes subspace alignment by jointly modeling predictive-support statistics and residual spectral behavior under domain shift.

In summary, our contributions are threefold: First, we present PSC as a practical source-free framework for test-time adaptation in image regression, designed to remain robust under distribution shift while staying compatible with pretrained regressors. Second, we provide a clear theoretical account of why the framework is reliable, including identifiability of key predictive statistics and guarantees on prediction drift under shift. Third, we conduct broad empirical evaluation across multiple benchmarks and corruption settings, where PSC consistently improves over strong baselines, with especially clear gains in severe-shift scenarios.

\section{Preliminaries}

\subsection{Problem Setting}

\paragraph{Image Regression.}
Let $\mathcal{X}$ denote the image space and $\mathcal{Y}=\mathbb{R}$ the continuous target space.
Given a labeled source dataset
$\mathcal{D}^{s}=\{(\mathbf{x}_i^{s},y_i^{s})\}_{i=1}^{N_s} \subset \mathcal{X} \times \mathcal{Y}$,
we train a regression model $f_{\theta}:\mathcal{X}\to\mathcal{Y}$, parameterized by $\theta$,
by minimizing the supervised loss
\begin{equation}
\mathcal{L}(\theta)
=\frac{1}{N_s}\sum_{i=1}^{N_s}\ell\!\left(f_{\theta}(\mathbf{x}_i),y_i\right),
\end{equation}
where $\ell$ is a point-wise regression loss, e.g., MAE or MSE.
Typical image regression tasks include age estimation from facial images,
depth estimation, and head-pose regression.

\paragraph{Test-Time Adaptation.}
Test-time adaptation (TTA) studies how a source-trained model can remain reliable after deployment under target-domain shift, using only unlabeled target samples at inference time.
Formally, given an unlabeled target set $\mathcal{D}^{t}=\{\mathbf{x}_i^{t}\}_{i=1}^{N_t}$, TTA updates the model parameters online during testing.
The objective is to improve target-domain predictive performance while strictly avoiding access to source data $\mathcal{D}^{s}$.

\paragraph{Source-Free Domain Adaptation.}
Domain adaptation (DA) seeks to handle distribution shift between a labeled source domain and a target domain. Since many DA methods require source data during adaptation, they are not directly applicable to TTA, where only a pretrained model and unlabeled target samples are available. Source-free domain adaptation (SFDA) addresses this limitation. Widely used methods include Domain-Adversarial Neural Networks (DANN)~\cite{ganin2016domain}, Representation Subspace Distance (RSD)~\cite{DAR_ICML_21}, and Feature Restoration (FR)~\cite{eastwoodsource}.

\paragraph{Self-Supervised Test-Time Training.}
Self-supervised test-time training (TTT) improves robustness to distribution shift by updating a model on unlabeled target samples during inference, using self-supervised consistency~\cite{sinha2023test} or reconstruction-style signals~\cite{NEURIPS2021_b618c321} available at test time. In this work, we re-implement two well-known variants for the regression setting: Test-Time Training (TTT)~\cite{sun2020test} and Activation Matching (AM)~\cite{mirza2023actmad}.

\paragraph{Feature Alignment.}
Feature alignment (FA) methods address distribution shifts by matching the statistics of target features to those computed from the source domain.
These approaches typically store source feature statistics during training and align the target feature distribution during testing.
Representative examples include Batch-Normalization Adaptation (BNA)~\cite{benz2021revisiting} and Significant-Subspace Alignment (SSA)~\cite{adachitest}.
Because they operate directly in the feature space without requiring additional labels or auxiliary tasks, feature alignment methods are particularly suitable for regression-based TTA.

\subsection{Significant-Subspace Alignment}

Significant-Subspace Alignment (SSA)~\cite{adachitest} adapts a regression model by aligning target features to source feature statistics inside a source-informed low-dimensional subspace.
Let the regression model be decomposed as
\vspace{-5 pt}
\begin{equation}
f_{\theta}(\mathbf{x}) = (h_{\psi} \circ g_{\phi})(\mathbf{x}), \quad \theta = (\psi, \phi)
\vspace{-2 pt}
\end{equation}
where $g_{\phi}:\mathcal{X}\to\mathbb{R}^{D}$ is the feature extractor and
$h_{\psi}(\mathbf{z})=\mathbf{w}^{\top}\mathbf{z}+b$ is a linear regressor.

\paragraph{Subspace Construction.}
Given source features $\{\mathbf{z}_i^s\}_{i=1}^{N_s}$, where $\mathbf{z}_i^s = g_{\phi}(\mathbf{x}_i^s)$, SSA computes source statistics $(\boldsymbol{\mu}^s,\mathbf{\Sigma}^s)$. Let $\{(\lambda_k^s,\mathbf{v}_k^s)\}_{k=1}^{D}$ be the eigenpairs of $\mathbf{\Sigma}^s$, sorted as $\lambda_1^s\ge\cdots\ge\lambda_D^s$. SSA keeps the top-$K$ components and defines
\vspace{-5 pt}
\begin{equation}
\begin{aligned}
\mathbf{V}^s&=\left[\mathbf{v}_1^s,\ldots,\mathbf{v}_K^s\right]^\top \in \mathbb{R}^{K\times D},\\
\mathbf{\Lambda}^s&=\mathrm{diag}(\lambda_1^s,\ldots,\lambda_K^s) \in \mathbb{R}^{K\times K}.
\end{aligned}
\label{eq:ssa_basis}
\vspace{-5 pt}
\end{equation}
The projected source feature is $\mathbf{u}_i^s=\mathbf{V}^s(\mathbf{z}_i^s-\boldsymbol{\mu}^s)$, and SSA assumes $\mathbf{u}_i^s\sim\mathcal{N}(\mathbf{0},\mathbf{\Lambda}^s)$.

\paragraph{Test-Time Objective.}
For a target mini-batch $\mathcal{B}^t$, for each $\mathbf{z}_i^t\in\mathcal{B}^t$, define the projected feature $\mathbf{u}_i^t=\mathbf{V}^s(\mathbf{z}_i^t-\boldsymbol{\mu}^s)$. Let $\tilde{\boldsymbol{\mu}}^t\in\mathbb{R}^K$ and $(\tilde{\boldsymbol{\sigma}}^2)^t\in\mathbb{R}^K$ denote the mean and variance of $\mathbf{u}_i^t$. SSA minimizes symmetric KL (SKL):
\vspace{-5 pt}
\begin{equation}
\begin{aligned}
\mathcal{L}_{\mathrm{SSA}}
&=
\sum_{k=1}^{K}\alpha_k\mathrm{SKL}\!\left(\mathcal{N}(0,\lambda_k^s)\,\|\,\mathcal{N}(\tilde{\mu}_k^t,(\tilde{\sigma}_k^{t})^2)\right)\\
&=
\frac{1}{2}\sum_{k=1}^{K}\alpha_k\!\!\left[\frac{(\tilde{\mu}_k^{t})^2+\lambda_k^s}{(\tilde{\sigma}_k^{t})^2}\!+\!\frac{(\tilde{\mu}_k^{t})^2+(\tilde{\sigma}_k^{t})^2}{\lambda_k^s}\!-\!2\right]
\end{aligned}
\end{equation}
where $\alpha_k = 1 + |\mathbf{w}^{\top}\mathbf{v}_k^s|$, so SSA assigns larger importance to source directions that have stronger influence on the prediction.
SSA is effective because it restricts adaptation to the predictive support inferred from source features.
However, it only models the source distribution \emph{inside} the selected support subspace and leaves the complementary residual space unmodeled.
This can be restrictive under target shift, where adaptation quality may also depend on controlling how much target features leak outside the source-informed predictive support.

\section{Methodology}

We propose Predictive Spectral Calibration (PSC) as a source-free TTA method that extends SSA from subspace-only alignment to full \emph{block spectral matching}.

\subsection{Block Spectral Source Model}

Let $\mathbf{z}^s=g_{\phi}(\mathbf{x}^s)\in\mathbb{R}^{D}$ be source features with mean $\boldsymbol{\mu}^s$ and covariance $\mathbf{\Sigma}^s$.
We reuse the source subspace basis $\mathbf{V}^s$ defined in Eq.~\eqref{eq:ssa_basis}; by construction, $\mathbf{V}^s\mathbf{V}^{s\top}=\mathbf{I}_K$.
Define
\begin{equation}
\mathbf{P}_{\parallel}^s=\mathbf{V}^{s\top}\mathbf{V}^s\in\mathbb{R}^D,\quad
\mathbf{P}_{\perp}^s=\mathbf{I}_D-\mathbf{P}_{\parallel}^s\in\mathbb{R}^D.
\end{equation}
PSC approximates source covariance as
\vspace{-3 pt}
\begin{equation}
\mathbf{\Sigma}^s
\approx
\mathbf{V}^{s\top}\mathbf{\Lambda}^s\mathbf{V}^s
+
\tau\mathbf{P}_{\perp}^s,
\label{eq:PSC_cov}
\vspace{-3 pt}
\end{equation}
where the residual spectral floor is
\vspace{-3 pt}
\begin{equation}
\tau=
\frac{1}{D-K}\sum_{k=K+1}^{D}\lambda_k^s.
\vspace{-3 pt}
\end{equation}
Eq.~\eqref{eq:PSC_cov} explicitly separates an anisotropic predictive block (support) and an isotropic residual block (complement).

\paragraph{Sample Decomposition.}
For any sample $\mathbf{x}_i$, define
\begin{equation}
\overline{\mathbf{z}}_{i}=g_{\phi}(\mathbf{x}_i)-\boldsymbol{\mu}^s.
\end{equation}
PSC decomposes $\overline{\mathbf{z}}_{i}$ as
\begin{equation}
\mathbf{u}_i=\mathbf{V}^s\overline{\mathbf{z}}_{i},
\quad
\mathbf{r}_i=\mathbf{P}_{\perp}^s\overline{\mathbf{z}}_{i}
=\overline{\mathbf{z}}_{i}-\mathbf{V}^{s\top}\mathbf{u}_i.
\label{eq:r_def}
\end{equation}
For source samples, this yields a block Gaussian model:
\begin{equation}
\mathbf{u}_i^s\sim\mathcal{N}(\mathbf{0},\mathbf{\Lambda}^s),
\quad
\mathbf{r}_i^s\sim\mathcal{N}(\mathbf{0},\tau\mathbf{P}_{\perp}^s).
\label{eq:block_gauss_source}
\end{equation}
Here $\mathbf{u}_i$ captures the component inside predictive support, while $\mathbf{r}_i$ represents \emph{spectral slack}, i.e., residual feature mass in the orthogonal complement outside that support.

\subsection{Support-Space Spectral Matching}

Inside the support, PSC uses a fixed rank-$K^2$ probe bank
\vspace{-7 pt}
\begin{equation}
\mathcal{Q}
=
\bigl\{ e_i \bigr\}_{i=1}^K
\;\cup\;
\bigl\{
q_{ij}^+, q_{ij}^-
\bigr\}_{1 \le i < j \le K},\quad |\mathcal{Q}|=K^2
\vspace{-7 pt}
\label{eq:probe_bank}
\end{equation}
where
\begin{equation}
q_{ij}^+
=
\frac{e_i + e_j}{\sqrt{2}},
\qquad
q_{ij}^-
=
\frac{e_i - e_j}{\sqrt{2}},
\label{eq:probe_pair}
\vspace{-5 pt}
\end{equation}
and $\{e_i\}_{i=1}^K$ denotes the canonical basis of $\mathbb{R}^K$.

For a target mini-batch $\mathcal{B}^t=\{\mathbf{x}_i^t\}_{i=1}^{B}$, we compute projected support features as in Eq.~\eqref{eq:r_def}:
\vspace{-7 pt}
\begin{equation}
\mathbf{u}_i^t
=
\mathbf{V}^s\!\left(g_{\phi}(\mathbf{x}_i^t)-\boldsymbol{\mu}^s\right).
\vspace{-7 pt}
\end{equation}
For each probe $\mathbf{q}\in\mathcal{Q}$, let $p_i(\mathbf{q})=\mathbf{q}^{\top}\mathbf{u}_i^t$, and denote by $\hat{\mu}_{\mathbf{q}}$ and $\hat{\sigma}_{\mathbf{q}}^2$ the empirical mean and variance of $\{p_i(\mathbf{q})\}_{i=1}^{B}$.
The source variance along $\mathbf{q}$ is
\vspace{-2 pt}
\begin{equation}
(\sigma_{\mathbf{q}}^{s})^2=\mathbf{q}^{\top}\mathbf{\Lambda}^s\mathbf{q}.
\vspace{-2 pt}
\end{equation}

\begin{proposition}[Identifiability from the $K^2$ probe bank]
\label{prop:main_probe_identifiability}
Assume
\(
\mathbf{u}^s\sim\mathcal{N}(\mathbf{0},\mathbf{\Lambda}^s)
\)
and
\(
\mathbf{u}^t\sim\mathcal{N}(\boldsymbol{\mu}^t,\mathbf{\Sigma}^t)
\)
in the support subspace.
If, for every $\mathbf{q}\in\mathcal{Q}$,
\(
\mathbf{q}^{\top}\mathbf{u}^t\stackrel{d}{=}\mathbf{q}^{\top}\mathbf{u}^s
\),
then
\(
\boldsymbol{\mu}^t=\mathbf{0}
\)
and
\(
\mathbf{\Sigma}^t=\mathbf{\Lambda}^s
\).
Equivalently,
\(
\mathbf{u}^t\stackrel{d}{=}\mathbf{u}^s
\).
\end{proposition}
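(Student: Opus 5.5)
Since both $\mathbf{u}^s$ and $\mathbf{u}^t$ are Gaussian, every projection $\mathbf{q}^{\top}\mathbf{u}$ is a univariate Gaussian. Such a Gaussian is determined by its mean and variance. So the hypothesis $\mathbf{q}^{\top}\mathbf{u}^t\stackrel{d}{=}\mathbf{q}^{\top}\mathbf{u}^s$ for each $\mathbf{q}\in\mathcal{Q}$ is equivalent to two sets of scalar identities:
\[
\mathbf{q}^{\top}\boldsymbol{\mu}^t=0,\qquad \mathbf{q}^{\top}\mathbf{\Sigma}^t\mathbf{q}=\mathbf{q}^{\top}\mathbf{\Lambda}^s\mathbf{q}\quad\text{for all }\mathbf{q}\in\mathcal{Q}.
\]
The plan is to show that these $K^2$ linear constraints force the first two moments to agree. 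Gaussianity then gives equality in distribution.

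\textbf{Means.} The probe bank contains the canonical vectors $e_1,\ldots,e_K$. Applying the mean identity to these gives $\mu^t_k=e_k^{\top}\boldsymbol{\mu}^t=0$ for every $k$, hence $\boldsymbol{\mu}^t=\mathbf{0}$.

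\textbf{Covariances.} Let $\mathbf{M}=\mathbf{\Sigma}^t-\mathbf{\Lambda}^s$. This matrix is symmetric, and $\mathbf{q}^{\top}\mathbf{M}\mathbf{q}=0$ for all $\mathbf{q}\in\mathcal{Q}$.
\begin{itemize}
\item The probes $e_i$ give $M_{ii}=0$ for every $i$.
\item For $i<j$, the probe $q_{ij}^{+}$ gives $\tfrac12(M_{ii}+M_{jj}+2M_{ij})=0$, using symmetry $M_{ij}=M_{ji}$.
\item Combined with the vanishing diagonal, this yields $M_{ij}=0$.
\end{itemize}
Hence $\mathbf{M}=\mathbf{0}$ and $\mathbf{\Sigma}^t=\mathbf{\Lambda}^s$. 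The probes $q_{ij}^{-}$ are redundant here but consistent: they give $\tfrac12(M_{ii}+M_{jj}-2M_{ij})=0$. Subtracting the two pair identities gives $2M_{ij}=0$ directly, without using the diagonal step. This is a polarization argument on the space of symmetric matrices, which has dimension $K(K+1)/2\le K^2$.

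\textbf{Conclusion.} Two Gaussian vectors with equal means and covariances are equal in distribution, so $\mathbf{u}^t\stackrel{d}{=}\mathbf{u}^s$.

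\textbf{Main obstacle.} There is little genuine difficulty. The only point needing care is the symmetry of $\mathbf{\Sigma}^t$, which is what lets the quadratic form recover the off-diagonal entries. A quadratic form only sees the symmetric part of a matrix, and covariance matrices are symmetric, so this is legitimate. If $\mathbf{\Sigma}^t$ is degenerate, the argument is unchanged, since it uses only moments.
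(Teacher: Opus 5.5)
Your proof is correct and follows essentially the same route as the paper. The appendix shows that the probe-projected means and variances determine $(\boldsymbol{\mu},\mathbf{\Sigma})$ entrywise: the canonical probes give $\mu_i$ and $\Sigma_{ii}$, and $\Sigma_{ij}=\frac{1}{2}\bigl(\mathrm{Var}((q_{ij}^+)^\top u)-\mathrm{Var}((q_{ij}^-)^\top u)\bigr)$, which is exactly your polarization step applied to $\mathbf{M}=\mathbf{\Sigma}^t-\mathbf{\Lambda}^s$. Your remark that Gaussian projections are determined by their first two moments spells out the transfer between equality in distribution and matching moments, which the paper leaves implicit.
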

\begin{proof}
The claim follows directly from Proposition~\ref{prop:probe_bank_identifiability} in the Appendix, which proves that the $K^2$ probe bank uniquely identifies the full first- and second-order structure in the support subspace.
\end{proof}

To make alignment prediction-aware, we project the regression head into support coordinates:
\vspace{-5 pt}
\begin{equation}
\mathbf{a}=\mathbf{V}^s\mathbf{w}\in\mathbb{R}^{K},
\qquad
\beta_{\mathbf{q}}=\left(|\mathbf{a}^{\top}\mathbf{q}|+c\right)^{\gamma},
\vspace{-5 pt}
\end{equation}
with hyperparameters $c>0$ and $\gamma>0$.
The support loss is
\vspace{-5 pt}
\begin{equation}
\begin{aligned}
\mathcal{L}_{\mathrm{sup}}
&=
\frac{1}{K^2}\!\sum_{\mathbf{q}\in\mathcal{Q}}\beta_{\mathbf{q}}
\mathrm{SKL}\!\left(\mathcal{N}(0,(\sigma_{\mathbf{q}}^{s})^2)\,\|\,
\mathcal{N}(\hat{\mu}_{\mathbf{q}},\hat{\sigma}_{\mathbf{q}}^{2})
\right)\\
&=
\frac{1}{2K^2}\!\sum_{\mathbf{q}\in\mathcal{Q}}\beta_{\mathbf{q}}\!\left[
\frac{\hat{\mu}_{\mathbf{q}}^{2}\!+\!\hat{\sigma}_{\mathbf{q}}^{2}}{(\sigma_{\mathbf{q}}^{s})^2}
\!+\!
\frac{\hat{\mu}_{\mathbf{q}}^{2}\!+\!(\sigma_{\mathbf{q}}^{s})^2}{\hat{\sigma}_{\mathbf{q}}^{2}}
\!-\!2
\right].
\end{aligned}
\label{eq:loss_sig}
\end{equation}

\subsection{Residual-Space Spectral Matching}

For the same mini-batch, define residual statistics
\vspace{-5 pt}
\begin{equation}
\hat{\boldsymbol{\mu}}_{\perp}\!=\!\frac{1}{B}\!\sum_{i=1}^{B}\mathbf{r}_i,
\quad
\hat{\nu}_{\perp}\!=\!\frac{1}{B(D-K)}\!\sum_{i=1}^{B}\|\mathbf{r}_i-\hat{\boldsymbol{\mu}}_{\perp}\|_2^2.
\vspace{-5 pt}
\end{equation}
Under Eq.~\eqref{eq:block_gauss_source}, we work on the residual subspace
\vspace{-5 pt}
\begin{equation}
\mathcal{S}_{\perp}=\mathrm{Im}(\mathbf{P}_{\perp}^{s}),
\qquad
\dim(\mathcal{S}_{\perp})=D-K.
\vspace{-5 pt}
\end{equation}
Choose an orthonormal basis $\mathbf{U}_{\perp}\in\mathbb{R}^{D\times(D-K)}$ of $\mathcal{S}_{\perp}$:
\vspace{-5 pt}
\begin{equation}
\mathbf{U}_{\perp}^{\top}\mathbf{U}_{\perp}=\mathbf{I}_{D-K},
\qquad
\mathbf{U}_{\perp}\mathbf{U}_{\perp}^{\top}=\mathbf{P}_{\perp}^{s}.
\vspace{-5 pt}
\end{equation}
For any residual vector $\mathbf{r}\in\mathcal{S}_{\perp}$, the coordinate transform is
\vspace{-5 pt}
\begin{equation}
\tilde{\mathbf{r}}=\mathbf{U}_{\perp}^{\top}\mathbf{r},
\qquad
\mathbf{r}=\mathbf{U}_{\perp}\tilde{\mathbf{r}}.
\vspace{-5 pt}
\end{equation}
The ambient-space residual model is
\vspace{-5 pt}
\begin{equation}
\mathbf{r}^{s}\sim\mathcal{N}(\mathbf{0},\tau\mathbf{P}_{\perp}^{s}),
\qquad
\mathbf{r}^{t}\approx\mathcal{N}(\hat{\boldsymbol{\mu}}_{\perp},\hat{\nu}_{\perp}\mathbf{P}_{\perp}^{s}).
\end{equation}
Equivalently, in complement coordinates,
\vspace{-5 pt}
\begin{equation}
\tilde{\mathbf{r}}^{s}\sim\mathcal{N}(\mathbf{0},\tau\mathbf{I}_{D-K}),
\qquad
\tilde{\mathbf{r}}^{t}\approx\mathcal{N}(\hat{\tilde{\boldsymbol{\mu}}}_{\perp},\hat{\nu}_{\perp}\mathbf{I}_{D-K}),
\vspace{-5 pt}
\end{equation}
where
\(
\hat{\tilde{\boldsymbol{\mu}}}_{\perp}=\mathbf{U}_{\perp}^{\top}\hat{\boldsymbol{\mu}}_{\perp}
\). The residual loss is defined as:
\vspace{-5 pt}
\begin{equation}
\begin{aligned}
\mathcal{L}_{\mathrm{res}}
&=
\mathrm{SKL}\!\left(
\mathcal{N}(\mathbf{0},\tau\mathbf{I}_{D-K}),
\mathcal{N}(\hat{\tilde{\boldsymbol{\mu}}}_{\perp},\hat{\nu}_{\perp}\mathbf{I}_{D-K})
\right)\\
&=\frac{1}{2}\left[
\frac{\|\hat{\boldsymbol{\mu}}_{\perp}\|_2^2}{D-K}
\left(\frac{1}{\tau}+\frac{1}{\hat{\nu}_{\perp}}\right)
+\frac{\tau}{\hat{\nu}_{\perp}}
+\frac{\hat{\nu}_{\perp}}{\tau}-2
\right].
\end{aligned}
\label{eq:loss_slack}
\vspace{-5 pt}
\end{equation}
because $\|\hat{\tilde{\boldsymbol{\mu}}}_{\perp}\| = \|\hat{\boldsymbol{\mu}}_{\perp}\|$. This term explicitly controls feature leakage outside predictive support.

\begin{table}[t]
    \centering
    \caption{Test scores for TTA from source SVHN to target MNIST. The best scores are highlighted in \textcolor{blue}{blue}.}
    \label{tab:svhn_mnist_results}
    \resizebox{\linewidth}{!}{%
    \begin{tabular}{llVccc}
        \toprule
        Type & Method & \multicolumn{1}{l}{Venue} & $R^2(\uparrow)$ & RMSE $(\downarrow)$ & MAE $(\downarrow)$ \\
        \midrule
         & Source     &  & $0.239$ & $2.527$ & $1.909$ \\
        \midrule
        \multirow{3}{*}{DA} & DANN~\cite{ganin2016domain} & JMLR 2016 & $0.035$ & $2.653$ & $1.856$ \\
        & RSD~\cite{DAR_ICML_21} & ICML 2021 & $0.049$ & $2.777$ & $2.401$ \\
        & FR~\cite{eastwoodsource} & ICLR 2022 & $0.356$ & $2.323$ & $1.554$ \\
        \midrule
        \multirow{2}{*}{SS} & TTT~\cite{sun2020test} & ICML 2020 & $0.286$ & $2.447$ & $1.950$ \\
        & AM~\cite{mirza2023actmad} & CVPR 2023 & $0.276$ & $2.465$ & $1.637$\\
        \midrule
        \multirow{2}{*}{FA} & BNA~\cite{benz2021revisiting} & WACV 2021 & $0.341$ & $2.349$ & $1.581$ \\
        % DARE-GRAM  & CVPR 2023 & \\
        & SSA~\cite{adachitest} & ICLR 2025 & $0.452$ & $2.144$ & $1.342$ \\
        \hdashline
        & PSC ($\lambda=0$) &  & \textcolor{blue}{$0.473$} & \textcolor{blue}{$2.102$} & \textcolor{blue}{$1.310$} \\
        & PSC ($\lambda=1$) &  & $0.457$ & $2.134$ & $1.312$ \\
        \midrule
         & Oracle     &  & $0.866$ & $1.060$ & $0.558$ \\
        \bottomrule
    \end{tabular}%
    }
\end{table}

\begin{table*}[t]
    \centering
    \caption{Test $R^2$ scores on UTKFace under 13 corruption types (higher is better). The best scores are highlighted in \textcolor{blue}{blue}.}
    \label{tab:corruption_results}
    \resizebox{\linewidth}{!}{%
    \begin{tabular}{lcccccccccccccc}
        \toprule
        Method
        & \rotatebox{90}{\small\shortstack[l]{Gaussian\\noise}}
        & \rotatebox{90}{\small\shortstack[l]{Shot\\noise}}
        & \rotatebox{90}{\small\shortstack[l]{Impulse\\noise}}
        & \rotatebox{90}{\small\shortstack[l]{Defocus\\blur}}
        & \rotatebox{90}{\small\shortstack[l]{Motion\\blur}}
        & \rotatebox{90}{\small\shortstack[l]{Zoom\\blur}}
        & \rotatebox{90}{\small\shortstack[l]{Snow}}
        & \rotatebox{90}{\small\shortstack[l]{Fog}}
        & \rotatebox{90}{\small\shortstack[l]{Brightness}}
        & \rotatebox{90}{\small\shortstack[l]{Contrast}}
        & \rotatebox{90}{\small\shortstack[l]{Elastic\\transform}}
        & \rotatebox{90}{\small\shortstack[l]{Pixelate}}
        & \rotatebox{90}{\small\shortstack[l]{JPEG\\compression}}
        & Mean \\
        \midrule
        Source & \textcolor{gray}{$-2.557$} & \textcolor{gray}{$-0.715$} & \textcolor{gray}{$-3.897$} & \textcolor{gray}{$-0.023$} & $0.594$ & $0.651$ & \textcolor{gray}{$-0.185$} & $0.229$ & $0.263$ & $0.615$ & $0.750$ & \textcolor{gray}{$-2.101$} & \textcolor{gray}{$-0.190$} & -- \\
        \midrule
        DANN~\cite{ganin2016domain} & $0.512$ & $0.784$ & $0.281$ & \textcolor{blue}{$0.532$} & $0.772$ & \textcolor{blue}{$0.786$} & $0.224$ & $0.528$ & $0.107$ & $0.907$ & \textcolor{blue}{$0.827$} & \textcolor{blue}{$0.466$} & \textcolor{blue}{$0.631$} & $0.566$ \\
        RSD~\cite{DAR_ICML_21} & $0.302$ & $0.577$ & $0.129$ & $0.191$ & $0.461$ & $0.384$ & $0.173$ & $0.347$ & $0.245$ & $0.641$ & $0.500$ & $0.270$ & $0.280$ & $0.346$ \\
        FR~\cite{eastwoodsource} & $0.499$ & $0.860$ & $0.159$ & $0.455$ & $0.752$ & $0.710$ & $0.227$ & $0.551$ & $0.435$ & $0.943$ & $0.790$ & $0.282$ & $0.585$ & $0.558$ \\
        \midrule
        TTT~\cite{sun2020test} & $0.564$ & $0.779$ & $0.243$ & $0.387$ & $0.683$ & $0.730$ & $0.148$ & $0.501$ & $0.382$ & $0.858$ & $0.722$ & $0.415$ & $0.531$ & $0.534$ \\
        AM~\cite{mirza2023actmad} & $0.524$ & $0.764$ & $0.282$ & $0.355$ & $0.470$ & $0.619$ & $0.269$ & $0.535$ & $0.064$ & $0.554$ & $0.710$ & $0.445$ & $0.484$ & $0.467$ \\
        \midrule
        BNA~\cite{benz2021revisiting} & $0.493$ & $0.858$ & $0.151$ & $0.461$ & $0.754$ & $0.725$ & $0.246$ & $0.555$ & $0.472$ & $0.942$ & $0.800$ & $0.279$ & $0.586$ & $0.563$ \\
        SSA~\cite{adachitest} & $0.589$ & $0.860$ & $0.241$ & $0.491$ & $0.753$ & $0.754$ & $0.260$ & $0.552$ & $0.423$ & $0.934$ & $0.803$ & $0.334$ & $0.590$ & $0.583$ \\
		\hdashline
		PSC ($\lambda = 0$) & \textcolor{blue}{$0.621$} & \textcolor{blue}{$0.866$} & $0.265$ & $0.530$ & \textcolor{blue}{$0.774$} & $0.767$ & $0.263$ & \textcolor{blue}{$0.593$} & $0.474$ & $0.939$ & $0.814$ & $0.332$ & $0.618$ & $0.604$ \\
        PSC ($\lambda = 1$) & $0.605$ & $0.863$ & \textcolor{blue}{$0.288$} & $0.516$ & $0.763$ & $0.760$ & \textcolor{blue}{$0.343$} & $0.581$ & \textcolor{blue}{$0.554$} & \textcolor{blue}{$0.940$} & $0.798$ & $0.416$ & $0.624$ & \textcolor{blue}{$0.619$} \\
        \midrule
        Oracle & $0.688$ & $0.899$ & $0.369$ & $0.611$ & $0.831$ & $0.832$ & $0.513$ & $0.648$ & $0.614$ & $0.959$ & $0.839$ & $0.576$ & $0.679$ & $0.697$ \\
        \bottomrule
    \end{tabular}%
    }
\end{table*}

\subsection{Test-Time Objective}

PSC combines both blocks into one objective:
\vspace{-5 pt}
\begin{equation}
\mathcal{L}_{\mathrm{PSC}}
=
\mathcal{L}_{\mathrm{sup}}
+
\lambda\mathcal{L}_{\mathrm{res}},
\label{eq:loss_PSC}
\vspace{-5 pt}
\end{equation}
where $\lambda>0$ balances support and residual alignment.

\paragraph{Relation to SSA.}
PSC is a strict generalization of SSA in three senses.
First, if probes are canonical basis vectors, support matching reduces to axis-wise alignment used by SSA.
Second, if $\lambda=0$, PSC collapses to support-only matching, again recovering the SSA-style objective.
Third, for general probe banks and $\lambda>0$, PSC additionally constrains residual-space statistics, which SSA does not model.
Hence, SSA is a special case of PSC, while PSC can exploit richer directional statistics and out-of-subspace control under domain shift.

At test time, we keep source statistics
\(
\{\boldsymbol{\mu}^s,\mathbf{V}^s,\mathbf{\Lambda}^s,\tau\}
\)
fixed and update the feature-extractor parameters $\phi$ while freezing the head parameters $\psi$:
\begin{equation}
\phi^{*}=\arg\min_{\phi}\mathcal{L}_{\mathrm{PSC}}.
\end{equation}
Following standard regression TTA practice, this is implemented by updating only affine parameters in normalization layers of $g_{\phi}$.

\begin{proposition}[PSC controls predictive mean drift]
\label{prop:main_PSC}
Let $\mathbf{x}\in\mathcal{X}$ and $\mathbf z=g_{\phi}(\mathbf x)\in\mathbb R^{D}$ denote its feature representation.
Define the centered feature $\overline{\mathbf z}=\mathbf z-\boldsymbol{\mu}^s$, the support projection $\mathbf u=\mathbf V^s\overline{\mathbf z}$, and the residual projection $\mathbf r=\mathbf P_{\perp}^s\overline{\mathbf z}$.
For the linear head $h_{\psi}(\mathbf z)=\mathbf w^\top\mathbf z+b$, decompose
\vspace{-7 pt}
\[
\mathbf w=\mathbf V^{s\top}\mathbf a+\mathbf w_{\perp},
\qquad
\mathbf a=\mathbf V^s\mathbf w,
\qquad
\mathbf w_{\perp}=\mathbf P_{\perp}^s\mathbf w.
\vspace{-7 pt}
\]
Assume block Gaussian models
\vspace{-5 pt}
\[
\begin{aligned}
\mathbf u^{s}&\sim\mathcal N(\mathbf 0,\mathbf\Lambda^s),
&\mathbf r^{s}&\sim\mathcal N(\mathbf 0,\tau\mathbf P_{\perp}^s),\\
\mathbf u^{t}&\sim\mathcal N(\boldsymbol\mu^{t},\mathbf\Sigma^{t}),
&\mathbf r^{t}&\sim\mathcal N(\boldsymbol\mu_{\perp}^{t},\nu^{t}\mathbf P_{\perp}^s).
\end{aligned}
\vspace{-7 pt}
\]
Define
\vspace{-7 pt}
\begin{align*}
\mathcal D_{\mathrm{PSC}}
&=
\mathrm{SKL}\!\left(\mathcal N(\mathbf 0,\mathbf\Lambda^s),\mathcal N(\boldsymbol\mu^{t},\mathbf\Sigma^{t})\right)\\
&\quad+
\mathrm{SKL}\!\left(\mathcal N(\mathbf 0,\tau\mathbf P_{\perp}^s),\mathcal N(\boldsymbol\mu_{\perp}^{t},\nu^{t}\mathbf P_{\perp}^s)\right).
\vspace{-7 pt}
\end{align*}
Then for $\hat y=h_{\psi}(\mathbf z)$, the predictive mean drift satisfies
\vspace{-7 pt}
\[
\left|\mathbb E^{t}[\hat y]-\mathbb E^{s}[\hat y]\right|
\le
\sqrt{2\mathcal D_{\mathrm{PSC}}}
\sqrt{\mathbf a^\top\mathbf\Lambda^s\mathbf a+\tau\|\mathbf w_{\perp}\|_2^2}.
\]
\end{proposition}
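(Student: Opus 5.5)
The plan is to write the mean drift as a linear functional of the two target means, apply a weighted Cauchy--Schwarz inequality in each block, and then bound the resulting Mahalanobis-type quantities by the two symmetric KL terms. By linearity of the head, $\mathbb E[\hat y]=\mathbf w^\top\boldsymbol\mu^s+b+\mathbf w^\top\mathbb E[\overline{\mathbf z}]$. Under the source model $\mathbb E^s[\overline{\mathbf z}]=\mathbf 0$, so $\mathbf w^\top\boldsymbol\mu^s+b$ cancels in the difference. We split
\[
\overline{\mathbf z}=\mathbf V^{s\top}\mathbf u+\mathbf r,
\]
which holds because $\mathbf P_{\parallel}^s+\mathbf P_{\perp}^s=\mathbf I_D$. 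Together with the decomposition $\mathbf w=\mathbf V^{s\top}\mathbf a+\mathbf w_\perp$, $\mathbf V^s\mathbf V^{s\top}=\mathbf I_K$ and $\mathbf V^s\mathbf P_\perp^s=\mathbf 0$, this gives
\[
\mathbb E^t[\hat y]-\mathbb E^s[\hat y]=\mathbf a^\top\boldsymbol\mu^t+\mathbf w_\perp^\top\boldsymbol\mu_\perp^t .
\]

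Next we apply Cauchy--Schwarz in each block with the source metric. For the support block, $|\mathbf a^\top\boldsymbol\mu^t|\le\sqrt{\mathbf a^\top\mathbf\Lambda^s\mathbf a}\,\sqrt{\boldsymbol\mu^{t\top}(\mathbf\Lambda^s)^{-1}\boldsymbol\mu^t}$, writing $\mathbf a^\top\boldsymbol\mu^t=((\mathbf\Lambda^s)^{1/2}\mathbf a)^\top((\mathbf\Lambda^s)^{-1/2}\boldsymbol\mu^t)$. For the residual block, $|\mathbf w_\perp^\top\boldsymbol\mu_\perp^t|\le\sqrt{\tau}\|\mathbf w_\perp\|_2\cdot\|\boldsymbol\mu_\perp^t\|_2/\sqrt\tau$. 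A second two-term Cauchy--Schwarz, $xy+x'y'\le\sqrt{x^2+x'^2}\sqrt{y^2+y'^2}$, yields
\[
|\Delta|\le\sqrt{\mathbf a^\top\mathbf\Lambda^s\mathbf a+\tau\|\mathbf w_\perp\|_2^2}\;\sqrt{\boldsymbol\mu^{t\top}(\mathbf\Lambda^s)^{-1}\boldsymbol\mu^t+\|\boldsymbol\mu_\perp^t\|_2^2/\tau}.
\]
It therefore suffices to show that each Mahalanobis term is at most twice the corresponding SKL term.

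For the support block, we take SKL to be the sum of the two directed KLs, which matches the paper's one-dimensional formula. For Gaussians on $\mathbb R^K$ this gives
\[
\mathrm{SKL}=\tfrac12\Bigl[\operatorname{tr}\bigl((\mathbf\Sigma^t)^{-1}\mathbf\Lambda^s\bigr)+\operatorname{tr}\bigl((\mathbf\Lambda^s)^{-1}\mathbf\Sigma^t\bigr)-2K+\boldsymbol\mu^{t\top}\bigl((\mathbf\Lambda^s)^{-1}+(\mathbf\Sigma^t)^{-1}\bigr)\boldsymbol\mu^t\Bigr],
\]
where the log-determinants cancel. Let $\mathbf M=(\mathbf\Lambda^s)^{-1/2}\mathbf\Sigma^t(\mathbf\Lambda^s)^{-1/2}$, which is symmetric positive definite with eigenvalues $m_k>0$. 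Then the trace part equals $\sum_k(m_k+1/m_k-2)\ge0$. The term $\boldsymbol\mu^{t\top}(\mathbf\Sigma^t)^{-1}\boldsymbol\mu^t$ is also nonnegative. Dropping both gives $\boldsymbol\mu^{t\top}(\mathbf\Lambda^s)^{-1}\boldsymbol\mu^t\le2\,\mathrm{SKL}_{\sup}$.

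For the residual block, the Gaussians are degenerate in $\mathbb R^D$. I expect this to be the main technical point, since SKL has to be made meaningful there. We pass to complement coordinates $\tilde{\mathbf r}=\mathbf U_\perp^\top\mathbf r$ as in the residual-loss construction. This is legitimate because $\mathbf r=\mathbf P_\perp^s\overline{\mathbf z}$, so $\boldsymbol\mu_\perp^t\in\mathcal S_\perp$ and $\|\mathbf U_\perp^\top\boldsymbol\mu_\perp^t\|=\|\boldsymbol\mu_\perp^t\|$. The isotropic case of the formula above then gives
\[
\mathrm{SKL}_{\mathrm{res}}=\tfrac12\Bigl[(D-K)\bigl(\tfrac{\tau}{\nu^t}+\tfrac{\nu^t}{\tau}-2\bigr)+\|\boldsymbol\mu_\perp^t\|_2^2\bigl(\tfrac1\tau+\tfrac1{\nu^t}\bigr)\Bigr]\ge\tfrac{\|\boldsymbol\mu_\perp^t\|_2^2}{2\tau}.
\]
Summing the two bounds gives $\boldsymbol\mu^{t\top}(\mathbf\Lambda^s)^{-1}\boldsymbol\mu^t+\|\boldsymbol\mu_\perp^t\|^2/\tau\le2\mathcal D_{\mathrm{PSC}}$, which completes the argument.

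Besides the degeneracy issue, the proof needs $\mathbf\Lambda^s,\mathbf\Sigma^t\succ0$ and $\tau,\nu^t>0$ so that all quantities are defined. These are implicit in the statement.
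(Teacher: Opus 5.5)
Your proposal is correct and follows the same route as the paper's proof. Both write $\Delta_\mu=\mathbf a^\top\boldsymbol\mu^t+\mathbf w_\perp^\top\boldsymbol\mu_\perp^t$, apply block-wise Cauchy--Schwarz in the source metrics followed by the two-term Cauchy--Schwarz, and lower-bound each SKL by its source-Mahalanobis mean term, handling the degenerate residual block in complement coordinates. Two of your steps are slightly more careful than the paper's versions: the similarity transform $\mathbf M=(\mathbf\Lambda^s)^{-1/2}\mathbf\Sigma^t(\mathbf\Lambda^s)^{-1/2}$ for the trace term, and the explicit check that $\boldsymbol\mu_\perp^t\in\mathcal S_\perp$.
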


\begin{proof}
Write the mean drift as
$\Delta_{\mu}=\mathbb E^{t}[\hat y]-\mathbb E^{s}[\hat y]=\mathbf a^{\top}\boldsymbol\mu^{t}+\mathbf w_{\perp}^{\top}\boldsymbol\mu_{\perp}^{t}$
under the support/residual decomposition. Applying Cauchy--Schwarz in the source metrics induced by $(\mathbf\Lambda^{s},\tau\mathbf P_{\perp}^{s})$, and using the standard Gaussian SKL lower bounds, yields
$|\Delta_{\mu}|\le\sqrt{2\mathcal D_{\mathrm{PSC}}}\sqrt{\mathbf a^{\top}\mathbf\Lambda^{s}\mathbf a+\tau\|\mathbf w_{\perp}\|_2^2}$.
The detailed derivation is provided in the Appendix (see Proposition~\ref{prop:PSC} and the subsequent detailed proof).
\end{proof}

\paragraph{Meaning for PSC.}
This proposition links PSC's optimization target to predictive robustness: reducing $\mathcal D_{\mathrm{PSC}}$ directly tightens an upper bound on target-domain mean prediction drift. The bound separates \emph{distribution mismatch} (first factor) from \emph{model sensitivity} along support/residual directions (second factor), explaining why jointly aligning support statistics and residual slack is beneficial under shift.

\section{Experiments}

\paragraph{Experimental design.}
We evaluate PSC under two shift regimes. First, we use cross-domain transfer from SVHN to MNIST to test adaptation under a large appearance gap. Second, we use UTKFace with 13 corruption types to test robustness to structured low-level perturbations. The tables include two references: \textit{Source} denotes the source-trained model without TTA, while \textit{Oracle} is a target-supervised upper bound using target labels.

\paragraph{Effect of $\lambda$ across settings.}
The preferred $\lambda$ depends on the shift type. For SVHN$\rightarrow$MNIST, $\lambda=0$ is typically better because the cross-domain gap is strong and heterogeneous; imposing residual-space matching can over-constrain adaptation and reduce flexibility needed to absorb target-specific changes. For UTKFace corruptions, $\lambda=1$ is typically better because the shift is mainly nuisance corruption around the same underlying semantics, so residual-space regularization helps suppress out-of-support noise leakage. In short, stronger semantic transfer favors support-focused alignment, while corruption robustness benefits from explicit residual control.

\section{Conclusion}

We introduced Predictive Spectral Calibration (PSC), a source-free test-time adaptation framework for image regression that combines support-space alignment with residual-space calibration. Across cross-domain transfer and corruption settings, PSC shows consistent improvements over strong baselines while remaining simple and practical to deploy. We view this study as a \emph{preliminary idea} toward more principled regression TTA, and we hope it motivates further work on stronger objectives, broader backbones, and larger-scale benchmarks.

\newpage

{
    \small
    \bibliographystyle{ieeenat_fullname}
    \bibliography{main}
}

% WARNING: do not forget to delete the supplementary pages from your submission
\appendix 
\clearpage
\setcounter{page}{1}
\onecolumn
\setlength{\parindent}{0pt}
\begin{center}
    {\Large\textbf{\thetitle}\par}
    \vspace{0.5em}
    {\Large Supplementary Material\par}
    \vspace{1.0em}
\end{center}

\section{Theoretical Details}

\subsection{Detailed Proof of Propositon~\ref{prop:main_probe_identifiability}}

\begin{proposition}[Identifiability of subspace first- and second-order structure from the $K^2$ probe bank]
\label{prop:probe_bank_identifiability}
Let $u \in \mathbb{R}^K$ be a random vector with mean $\mu \in \mathbb{R}^K$ and covariance matrix $\Sigma \in \mathbb{R}^{K \times K}$.
Define the probe bank
\begin{equation}
\mathcal{Q} = \{e_i\}_{i=1}^K \cup \{q_{ij}^+, q_{ij}^-\}_{1 \le i < j \le K}, \qquad q_{ij}^{\pm} = \frac{e_i \pm e_j}{\sqrt{2}},
\end{equation}
where $\{e_i\}_{i=1}^K$ is the canonical basis of $\mathbb{R}^K$. Then the projected first- and second-order moments
\begin{equation}
\left\{\mathbb{E}[q^\top u],\, \mathrm{Var}(q^\top u)\right\}_{q \in \mathcal{Q}}
\end{equation}
uniquely determine $(\mu, \Sigma)$. More explicitly, $\mu_i = \mathbb{E}[e_i^\top u]$, $\Sigma_{ii} = \mathrm{Var}(e_i^\top u)$, and for each $1 \le i < j \le K$,
\begin{equation}
\Sigma_{ij} = \frac{1}{2}\left(\mathrm{Var}\!\left((q_{ij}^+)^\top u\right) - \mathrm{Var}\!\left((q_{ij}^-)^\top u\right)\right).
\end{equation}
Hence, the $K^2$ probes recover the complete first- and second-order structure of $u$ in the $K$-dimensional subspace.
\end{proposition}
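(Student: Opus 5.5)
The plan is a direct computation: write the projected moments in terms of $(\mu,\Sigma)$ by linearity, then invert those relations entry by entry. For any fixed $q\in\mathbb{R}^K$, linearity of expectation and bilinearity of covariance give
\begin{equation*}
\mathbb{E}[q^\top u]=q^\top\mu,\qquad \mathrm{Var}(q^\top u)=q^\top\Sigma q .
\end{equation*}
These require only that the first two moments exist; Gaussianity is never used. The whole claim therefore reduces to showing that the linear map
\begin{equation*}
(\mu,\Sigma)\mapsto\bigl(q^\top\mu,\;q^\top\Sigma q\bigr)_{q\in\mathcal{Q}},
\end{equation*}
defined on $\mathbb{R}^K\times\mathrm{Sym}(K)$, is injective. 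We show this by writing an explicit left inverse.

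\textbf{First moments.} Take $q=e_i$. Then $\mathbb{E}[e_i^\top u]=\mu_i$, so the $K$ canonical probes recover $\mu$ componentwise.

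\textbf{Diagonal of $\Sigma$.} Again with $q=e_i$, we get $\mathrm{Var}(e_i^\top u)=e_i^\top\Sigma e_i=\Sigma_{ii}$.

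\textbf{Off-diagonal entries.} For $i<j$, expand using the symmetry $\Sigma_{ij}=\Sigma_{ji}$:
\begin{equation*}
(q_{ij}^{\pm})^\top\Sigma\, q_{ij}^{\pm}=\tfrac12\bigl(\Sigma_{ii}+\Sigma_{jj}\pm 2\Sigma_{ij}\bigr).
\end{equation*}
Subtracting the minus case from the plus case gives
\begin{equation*}
\mathrm{Var}\bigl((q_{ij}^+)^\top u\bigr)-\mathrm{Var}\bigl((q_{ij}^-)^\top u\bigr)=2\Sigma_{ij},
\end{equation*}
which is the stated formula. Symmetry then fixes $\Sigma_{ji}$ as well. (Alternatively, the plus probe together with the already-known diagonal would suffice.)

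\textbf{Uniqueness.} Every entry of $\mu$ and $\Sigma$ has now been written as an explicit function of the probe moments. So two pairs $(\mu,\Sigma)$ and $(\mu',\Sigma')$ that produce the same probe moments must coincide, which is the uniqueness claim.

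\textbf{Count.} The bank has $K$ canonical probes plus $2\binom{K}{2}=K(K-1)$ paired probes, for a total of $K^2$. This matches the paper's description.

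\textbf{Route to Proposition~\ref{prop:main_probe_identifiability}.} Equality in distribution $q^\top u^t\stackrel{d}{=}q^\top u^s$ in particular implies equal means and variances. Those moments are $(0,\ q^\top\Lambda^s q)$ for the source, so the determination above forces $\mu^t=0$ and $\Sigma^t=\Lambda^s$. Since a Gaussian is determined by its mean and covariance, $u^t\stackrel{d}{=}u^s$ follows.

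\textbf{Main obstacle.} There is essentially none. The one point needing care is that $\Sigma$ must be symmetric. A quadratic form $q^\top\Sigma q$ only sees the symmetric part of $\Sigma$, so the recovery formula gives $\Sigma_{ij}$ only under the covariance-matrix assumption $\Sigma=\Sigma^\top$, which I will state explicitly.
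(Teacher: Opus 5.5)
Your proof is correct and follows the paper's argument: write $\mathbb{E}[q^\top u]=q^\top\mu$ and $\mathrm{Var}(q^\top u)=q^\top\Sigma q$, read off $\mu_i$ and $\Sigma_{ii}$ from the canonical probes, and subtract the two $q_{ij}^{\pm}$ variances to isolate $2\Sigma_{ij}$. Your side remarks are accurate but go beyond what the paper states: the symmetry requirement on $\Sigma$, the $K^2$ count, the observation that the plus probe plus the known diagonal already suffice, and the route to Proposition~\ref{prop:main_probe_identifiability}.
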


\begin{proof}
For any $q \in \mathbb{R}^K$, we have $\mathbb{E}[q^\top u] = q^\top \mu$ and $\mathrm{Var}(q^\top u) = q^\top \Sigma q$. Taking $q = e_i$ gives
\begin{equation}
\mu_i = \mathbb{E}[e_i^\top u], \qquad \Sigma_{ii} = \mathrm{Var}(e_i^\top u),
\end{equation}
so all entries of $\mu$ and all diagonal entries of $\Sigma$ are identified.

Now fix $1 \le i < j \le K$. Using $q_{ij}^{\pm} = (e_i \pm e_j)/\sqrt{2}$,
\begin{equation}
\mathrm{Var}\!\left((q_{ij}^+)^\top u\right) = \frac{1}{2}\left(\Sigma_{ii} + \Sigma_{jj} + 2\Sigma_{ij}\right), \qquad \mathrm{Var}\!\left((q_{ij}^-)^\top u\right) = \frac{1}{2}\left(\Sigma_{ii} + \Sigma_{jj} - 2\Sigma_{ij}\right).
\end{equation}
Subtracting yields
\begin{equation}
\Sigma_{ij} = \frac{1}{2}\left(\mathrm{Var}\!\left((q_{ij}^+)^\top u\right) - \mathrm{Var}\!\left((q_{ij}^-)^\top u\right)\right).
\end{equation}
Thus every off-diagonal entry is identified. Therefore all entries of $(\mu, \Sigma)$ are uniquely recovered from $\{\mathbb{E}[q^\top u], \mathrm{Var}(q^\top u)\}_{q \in \mathcal{Q}}$.
\end{proof}

\subsection{Detailed Proof of Proposition~\ref{prop:main_PSC}}

\begin{proposition}[PSC controls predictive mean drift]
\label{prop:PSC}
Let $\mathbf{x}\in\mathcal{X}$ and $\mathbf{z}=g_{\phi}(\mathbf{x})\in\mathbb{R}^{D}$ denote the feature representation. Define
\[
\overline{\mathbf{z}}:=\mathbf{z}-\boldsymbol{\mu}^{s},
\qquad
\mathbf{u}:=\mathbf{V}^{s}\overline{\mathbf{z}},
\qquad
\mathbf{r}:=\mathbf{P}_{\perp}^{s}\overline{\mathbf{z}}.
\]
For the linear regressor $h_{\psi}(\mathbf{z})=\mathbf{w}^{\top}\mathbf{z}+b$, set
\[
\mathbf{a}:=\mathbf{V}^{s}\mathbf{w},
\qquad
\mathbf{w}_{\perp}:=\mathbf{P}_{\perp}^{s}\mathbf{w},
\qquad
\mathbf{w}=\mathbf{V}^{s\top}\mathbf{a}+\mathbf{w}_{\perp}.
\]

Assume the source and target block models
\[
\mathbf{u}^{s}\sim\mathcal{N}(\mathbf{0},\mathbf{\Lambda}^{s}),
\quad
\mathbf{r}^{s}\sim\mathcal{N}(\mathbf{0},\tau\mathbf{P}_{\perp}^{s}),
\quad
\mathbf{u}^{t}\sim\mathcal{N}(\boldsymbol{\mu}^{t},\mathbf{\Sigma}^{t}),
\quad
\mathbf{r}^{t}\sim\mathcal{N}(\boldsymbol{\mu}_{\perp}^{t},\nu^{t}\mathbf{P}_{\perp}^{s}),
\]
with $\mathbf{\Lambda}^{s}\succ\mathbf{0}$, $\mathbf{\Sigma}^{t}\succ\mathbf{0}$, and $\tau,\nu^{t}>0$. Define
\[
\mathcal{D}_{\mathrm{PSC}}
:=
\operatorname{SKL}\bigl(\mathcal{N}(\mathbf{0},\mathbf{\Lambda}^{s}),\mathcal{N}(\boldsymbol{\mu}^{t},\mathbf{\Sigma}^{t})\bigr)
+
\operatorname{SKL}\bigl(\mathcal{N}(\mathbf{0},\tau\mathbf{P}_{\perp}^{s}),\mathcal{N}(\boldsymbol{\mu}_{\perp}^{t},\nu^{t}\mathbf{P}_{\perp}^{s})\bigr),
\]
where $\operatorname{SKL}(P,Q):=D_{\mathrm{KL}}(P\|Q)+D_{\mathrm{KL}}(Q\|P)$.

Then the predictive mean drift $\Delta_{\mu}:=\mathbb{E}^{t}[\hat{y}]-\mathbb{E}^{s}[\hat{y}]$ with $\hat{y}=h_{\psi}(\mathbf{z})$ satisfies
\[
|\Delta_{\mu}|
\le
\sqrt{2\,\mathcal{D}_{\mathrm{PSC}}}
\sqrt{\mathbf{a}^{\top}\mathbf{\Lambda}^{s}\mathbf{a}+\tau\|\mathbf{w}_{\perp}\|_{2}^{2}}.
\]
Consequently, the looser bound
\[
|\Delta_{\mu}|
\le
\sqrt{2\,\mathcal{D}_{\mathrm{PSC}}}
\left(
\sqrt{\mathbf{a}^{\top}\mathbf{\Lambda}^{s}\mathbf{a}}
+
\sqrt{\tau}\,\|\mathbf{w}_{\perp}\|_{2}
\right)
\]
also holds.
\end{proposition}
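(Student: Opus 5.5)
The plan is to write the drift as a sum of a support term and a residual term, lower-bound each Gaussian SKL by a Mahalanobis norm of the target mean, and then combine the two pieces with Cauchy--Schwarz.

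\textbf{Step 1: rewrite the drift.} Since $\mathbf V^{s}\mathbf V^{s\top}=\mathbf I_K$, we have $\overline{\mathbf z}=\mathbf V^{s\top}\mathbf u+\mathbf r$. Moreover $\mathbf w_\perp=\mathbf P_\perp^s\mathbf w$ and $\mathbf P_\perp^s$ is a symmetric idempotent with $\mathbf P_\perp^s\mathbf V^{s\top}=\mathbf 0$. Together these give
\[
\hat y=\mathbf w^\top\boldsymbol\mu^s+b+\mathbf a^\top\mathbf u+\mathbf w_\perp^\top\mathbf r .
\]
Taking expectations under the source model, where $\mathbb E^s[\mathbf u]=\mathbf 0$ and $\mathbb E^s[\mathbf r]=\mathbf 0$, and under the target model, the constant terms cancel. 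Hence $\Delta_\mu=\mathbf a^\top\boldsymbol\mu^t+\mathbf w_\perp^\top\boldsymbol\mu_\perp^t$. Note that $\boldsymbol\mu_\perp^t\in\mathcal S_\perp$, because $\mathbf r^t$ takes values in $\mathcal S_\perp$.

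\textbf{Step 2: SKL lower bounds.} For the support block, use the closed form
\[
\mathrm{SKL}\bigl(\mathcal N(\mathbf 0,\mathbf\Lambda^s),\mathcal N(\boldsymbol\mu^t,\mathbf\Sigma^t)\bigr)
=\tfrac12\bigl[\mathrm{tr}(\mathbf\Sigma^{t-1}\mathbf\Lambda^s)+\mathrm{tr}(\mathbf\Lambda^{s-1}\mathbf\Sigma^t)-2K\bigr]
+\tfrac12\boldsymbol\mu^{t\top}\bigl(\mathbf\Lambda^{s-1}+\mathbf\Sigma^{t-1}\bigr)\boldsymbol\mu^t .
\]
\begin{itemize}
\item The trace bracket is nonnegative: the eigenvalues $\rho_k>0$ of $\mathbf\Lambda^{s-1}\mathbf\Sigma^t$ satisfy $\rho_k+\rho_k^{-1}\ge 2$.
\item The term with $\mathbf\Sigma^{t-1}$ is nonnegative.
\end{itemize}
Therefore $m_1:=\boldsymbol\mu^{t\top}\mathbf\Lambda^{s-1}\boldsymbol\mu^t\le 2\,\mathrm{SKL}_1$.

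For the residual block, the covariances $\tau\mathbf P_\perp^s$ and $\nu^t\mathbf P_\perp^s$ are singular in $\mathbb R^D$. I would interpret the SKL in the complement coordinates $\tilde{\mathbf r}=\mathbf U_\perp^\top\mathbf r$, as in Eq.~\eqref{eq:loss_slack}. There both covariances are isotropic and nonsingular, and $\|\mathbf U_\perp^\top\boldsymbol\mu_\perp^t\|=\|\boldsymbol\mu_\perp^t\|$. The same argument then gives $m_2:=\|\boldsymbol\mu_\perp^t\|_2^2/\tau\le 2\,\mathrm{SKL}_2$.

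This handling of the degenerate residual Gaussian is the step I expect to need the most care. The SKL must be understood on $\mathcal S_\perp$, not in the ambient space; the statement implicitly presupposes this, and I would say so explicitly.

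\textbf{Step 3: Cauchy--Schwarz.} In the $\mathbf\Lambda^s$-metric,
\[
|\mathbf a^\top\boldsymbol\mu^t|\le\sqrt{\mathbf a^\top\mathbf\Lambda^s\mathbf a}\,\sqrt{m_1}.
\]
Since $\mathbf w_\perp,\boldsymbol\mu_\perp^t\in\mathcal S_\perp$,
\[
|\mathbf w_\perp^\top\boldsymbol\mu_\perp^t|\le\sqrt{\tau}\|\mathbf w_\perp\|_2\cdot\|\boldsymbol\mu_\perp^t\|_2/\sqrt\tau=\sqrt{\tau}\|\mathbf w_\perp\|_2\sqrt{m_2}.
\]
Adding these and applying Cauchy--Schwarz in $\mathbb R^2$ gives
\[
|\Delta_\mu|\le\sqrt{\mathbf a^\top\mathbf\Lambda^s\mathbf a+\tau\|\mathbf w_\perp\|_2^2}\,\sqrt{m_1+m_2}.
\]
Since $m_1+m_2\le 2\,\mathcal D_{\mathrm{PSC}}$, this is the first claimed bound.

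\textbf{Step 4: looser bound.} The second bound follows from $\sqrt{x+y}\le\sqrt x+\sqrt y$ for $x,y\ge0$.
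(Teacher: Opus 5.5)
Your proposal is correct and follows the paper's proof essentially step for step. Both arguments use the same decomposition $\Delta_\mu=\mathbf a^\top\boldsymbol\mu^t+\mathbf w_\perp^\top\boldsymbol\mu_\perp^t$, the same Gaussian SKL lower bounds (with the residual SKL evaluated in the complement coordinates $\mathbf U_\perp^\top\mathbf r$, as the paper does), and the same two-dimensional Cauchy--Schwarz step followed by $\sqrt{A^2+B^2}\le A+B$; your explicit remarks on the singular residual covariance and your eigenvalue argument for the trace term are slightly more careful than the paper's wording but leave the argument unchanged.
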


\begin{proof}
Using $\mathbf{z}=\boldsymbol{\mu}^{s}+\overline{\mathbf{z}}$ and
$\mathbf{w}=\mathbf{V}^{s\top}\mathbf{a}+\mathbf{w}_{\perp}$,
\[
\hat{y}=\mathbf{w}^{\top}\mathbf{z}+b
=\mathbf{w}^{\top}\boldsymbol{\mu}^{s}+b+\mathbf{a}^{\top}\mathbf{u}+\mathbf{w}_{\perp}^{\top}\mathbf{r}.
\]
Since $\mathbb{E}^{s}[\mathbf{u}]=\mathbf{0}$, $\mathbb{E}^{s}[\mathbf{r}]=\mathbf{0}$,
$\mathbb{E}^{t}[\mathbf{u}]=\boldsymbol{\mu}^{t}$, and
$\mathbb{E}^{t}[\mathbf{r}]=\boldsymbol{\mu}_{\perp}^{t}$,
\[
\Delta_{\mu}=\mathbb{E}^{t}[\hat{y}]-\mathbb{E}^{s}[\hat{y}]
=\mathbf{a}^{\top}\boldsymbol{\mu}^{t}+\mathbf{w}_{\perp}^{\top}\boldsymbol{\mu}_{\perp}^{t}.
\]

Define
\[
A:=\sqrt{\mathbf{a}^{\top}\mathbf{\Lambda}^{s}\mathbf{a}},
\quad
x:=\sqrt{(\boldsymbol{\mu}^{t})^{\top}(\mathbf{\Lambda}^{s})^{-1}\boldsymbol{\mu}^{t}},
\quad
B:=\sqrt{\tau}\,\|\mathbf{w}_{\perp}\|_{2},
\quad
y:=\sqrt{\frac{\|\boldsymbol{\mu}_{\perp}^{t}\|_{2}^{2}}{\tau}}.
\]
By Cauchy--Schwarz,
\[
\left|\mathbf{a}^{\top}\boldsymbol{\mu}^{t}\right|
=\left|((\mathbf{\Lambda}^{s})^{1/2}\mathbf{a})^{\top}((\mathbf{\Lambda}^{s})^{-1/2}\boldsymbol{\mu}^{t})\right|
\le Ax,
\]
and
\[
\left|\mathbf{w}_{\perp}^{\top}\boldsymbol{\mu}_{\perp}^{t}\right|
\le
\|\mathbf{w}_{\perp}\|_{2}\,\|\boldsymbol{\mu}_{\perp}^{t}\|_{2}
=By.
\]
Hence
\[
|\Delta_{\mu}|
\le Ax+By
\le \sqrt{A^{2}+B^{2}}\,\sqrt{x^{2}+y^{2}}.
\]
It remains to prove $x^{2}+y^{2}\le 2\mathcal{D}_{\mathrm{PSC}}$.

For the support block,
\[
\operatorname{SKL}\bigl(\mathcal{N}(\mathbf{0},\mathbf{\Lambda}^{s}),\mathcal{N}(\boldsymbol{\mu}^{t},\mathbf{\Sigma}^{t})\bigr)
=
\frac{1}{2}\Bigl[(\boldsymbol{\mu}^{t})^{\top}\bigl((\mathbf{\Lambda}^{s})^{-1}+(\mathbf{\Sigma}^{t})^{-1}\bigr)\boldsymbol{\mu}^{t}
+
\operatorname{tr}\!\bigl(\mathbf{\Lambda}^{s}(\mathbf{\Sigma}^{t})^{-1}+\mathbf{\Sigma}^{t}(\mathbf{\Lambda}^{s})^{-1}-2\mathbf{I}_{K}\bigr)\Bigr].
\]
Since $(\mathbf{\Sigma}^{t})^{-1}\succ\mathbf{0}$ and
$\operatorname{tr}(\mathbf{M}+\mathbf{M}^{-1}-2\mathbf{I}_{K})\ge 0$ for any $\mathbf{M}\succ\mathbf{0}$,
\[
\operatorname{SKL}\bigl(\mathcal{N}(\mathbf{0},\mathbf{\Lambda}^{s}),\mathcal{N}(\boldsymbol{\mu}^{t},\mathbf{\Sigma}^{t})\bigr)
\ge
\frac{1}{2}(\boldsymbol{\mu}^{t})^{\top}(\mathbf{\Lambda}^{s})^{-1}\boldsymbol{\mu}^{t}
=\frac{1}{2}x^{2}.
\]

For the residual block, let $\mathbf{U}_{\perp}\in\mathbb{R}^{D\times(D-K)}$ satisfy
\[
\mathbf{U}_{\perp}^{\top}\mathbf{U}_{\perp}=\mathbf{I}_{D-K},
\qquad
\mathbf{U}_{\perp}\mathbf{U}_{\perp}^{\top}=\mathbf{P}_{\perp}^{s}.
\]
Define
$\tilde{\mathbf{r}}^{s}:=\mathbf{U}_{\perp}^{\top}\mathbf{r}^{s}$,
$\tilde{\mathbf{r}}^{t}:=\mathbf{U}_{\perp}^{\top}\mathbf{r}^{t}$, and
$\tilde{\boldsymbol{\mu}}_{\perp}^{t}:=\mathbf{U}_{\perp}^{\top}\boldsymbol{\mu}_{\perp}^{t}$.
Then
\[
\tilde{\mathbf{r}}^{s}\sim\mathcal{N}(\mathbf{0},\tau\mathbf{I}_{D-K}),
\qquad
\tilde{\mathbf{r}}^{t}\sim\mathcal{N}(\tilde{\boldsymbol{\mu}}_{\perp}^{t},\nu^{t}\mathbf{I}_{D-K}),
\]
and
\[
\operatorname{SKL}\bigl(\mathcal{N}(\mathbf{0},\tau\mathbf{P}_{\perp}^{s}),\mathcal{N}(\boldsymbol{\mu}_{\perp}^{t},\nu^{t}\mathbf{P}_{\perp}^{s})\bigr)
=
\frac{1}{2}\Bigl[\left(\frac{1}{\tau}+\frac{1}{\nu^{t}}\right)\|\tilde{\boldsymbol{\mu}}_{\perp}^{t}\|_{2}^{2}
+
(D-K)\left(\frac{\tau}{\nu^{t}}+\frac{\nu^{t}}{\tau}-2\right)\Bigr].
\]
Because $\nu^{t}>0$ and
$\frac{\tau}{\nu^{t}}+\frac{\nu^{t}}{\tau}-2=\frac{(\tau-\nu^{t})^{2}}{\tau\nu^{t}}\ge 0$,
\[
\operatorname{SKL}\bigl(\mathcal{N}(\mathbf{0},\tau\mathbf{P}_{\perp}^{s}),\mathcal{N}(\boldsymbol{\mu}_{\perp}^{t},\nu^{t}\mathbf{P}_{\perp}^{s})\bigr)
\ge
\frac{1}{2}\,\frac{\|\tilde{\boldsymbol{\mu}}_{\perp}^{t}\|_{2}^{2}}{\tau}
=\frac{1}{2}\,\frac{\|\boldsymbol{\mu}_{\perp}^{t}\|_{2}^{2}}{\tau}
=\frac{1}{2}y^{2}.
\]

Summing the two bounds yields
\[
\mathcal{D}_{\mathrm{PSC}}\ge\frac{1}{2}(x^{2}+y^{2}),
\qquad
x^{2}+y^{2}\le 2\mathcal{D}_{\mathrm{PSC}}.
\]
Therefore,
\[
|\Delta_{\mu}|
\le
\sqrt{2\,\mathcal{D}_{\mathrm{PSC}}}
\sqrt{\mathbf{a}^{\top}\mathbf{\Lambda}^{s}\mathbf{a}+\tau\|\mathbf{w}_{\perp}\|_{2}^{2}}.
\]
Finally, $\sqrt{A^{2}+B^{2}}\le A+B$ gives the looser bound.
\end{proof}

\end{document}